\theoremstyle{thmstyleone}
\newtheorem{theorem}{Theorem}
\theoremstyle{thmstyletwo}%
\newtheorem{remark}{Remark}%
\theoremstyle{thmstylethree}%
\begin{document}

\title[Article Title]{Avian-Inspired High-Precision Tracking Control for Aerial Manipulators}


\author[1,2]{\fnm{Mengyu} \sur{Ji}}\email{jimengyu@westlake.edu.cn}

\author[2]{\fnm{Jiahao} \sur{Shen}}\email{shenjiahao@westlake.edu.cn}

\author*[2,3]{\fnm{Huazi} \sur{Cao}}\email{caohuazi@wioe.westlake.edu.cn}

\author[2,4,5]{\fnm{Shiyu} \sur{Zhao}}\email{zhaoshiyu@westlake.edu.cn}

\affil[1]{\orgdiv{College of Computer Science and Technology}, \orgname{ Zhejiang University}, \orgaddress{\city{Hangzhou 310058}, \state{Zhejiang}, \country{China}}}

\affil*[2]{\orgdiv{WINDY Lab}, \orgname{ Department of Artificial Intelligence, Westlake University}, \orgaddress{ \city{Hangzhou 310024}, \state{Zhejiang}, \country{China}}}

\affil[3]{\orgdiv{Westlake Institute for Optoelectronics}, \orgaddress{ \city{Hangzhou 311421}, \state{Zhejiang}, \country{China}}}

\affil[4]{\orgdiv{Future Industrial Research Centre, Westlake University}, \orgaddress{ \city{Hangzhou 310024}, \state{Zhejiang}, \country{China}}}

\affil[5]{\orgdiv{Institute of Advanced Technology, Westlake Institute for Advanced Study}, \orgaddress{ \city{Hangzhou 310024},  \state{Zhejiang}, \country{China}}}


\abstract{Aerial manipulators, composed of multirotors and robotic arms, have a structure and function highly reminiscent of avian species. This paper studies the tracking control problem for aerial manipulators. We propose an avian-inspired aerial manipulation system, which includes an avian-inspired robotic arm design, a Recursive Newton-Euler (RNE) method-based nonlinear flight controller, and a coordinated controller with two modes. Compared to existing methods, our proposed approach offers several attractive features. 
First, the morphological characteristics of avian species are used to determine the size proportion of the multirotor and the robotic arm in the aerial manipulator. Second, the dynamic coupling of the aerial manipulator is addressed by the RNE-based flight controller and a dual-mode coordinated controller. Specifically, under our proposed algorithm, the aerial manipulator can stabilize the end-effector’s pose, similar to avian head stabilization. The proposed approach is verified through three numerical experiments. 
The results show that even when the quadcopter is disturbed by different forces, the position error of the end-effector achieves millimeter-level accuracy, and the attitude error remains within 1$^{\circ}$. The limitation of this work is not considering aggressive manipulation like that seen in birds. Addressing this through future studies that explore real-world experiments will be a key direction for research.}

\keywords{aerial manipulators, avian-inspired, RNE-based flight control, pose stabilization}



\maketitle

\clearpage

\section{Introduction}

An aerial manipulator is typically composed of a multirotor and a robotic arm.
It combines the rapid mobility of multirotors with the high-precision manipulation capabilities of manipulators, offering significant potential for various applications. Currently, aerial manipulators have been widely used in fields such as contact-based inspection \cite{jimenez2015aerial,bodie2020active}, aerial pick-and-place \cite{cao2023motion,ramon2020grasp}, tree cavity inspection \cite{steich2016tree}, aerial additive manufacturing \cite{zhang2022aerial} (see recent surveys in  \cite{ollero2021past} and \cite{meng2022aerial}).

Aerial manipulators exhibit similarities with avian species. Structurally, both have a flying body and limbs for executing operations. Functionally, both can fly in 3D space and perform precise manipulation. Consequently, existing research approaches toward aerial manipulators can be categorized into two main groups. The first involves traditional methods derived from studies on multirotors and robotic arms. The second encompasses biomimetic approaches inspired by observations of animals.

The first group aims to develop high-precision aerial manipulators. An aerial manipulator is a complex multibody system with a strong coupling between the multirotor and the robotic arm. The motion of the multirotor and the robotic arm influence each other \cite{ruggiero2015multilayer}. 
To achieve high precision control, aerial manipulators have been studied in motion control, coordinated control, and mechanical design. 

Motion control algorithms are explored to achieve high-precision control for aerial manipulators. Existing motion control methods for aerial manipulators are mainly divided into two categories: full-body and decoupled control. Full-body control methods treat the multirotor and the robotic arm as an integrated whole. Dynamic coupling is naturally considered within the full-body control methods. This approach is promising in terms of control accuracy and robustness if the nonlinear dynamics can be precisely modeled \cite{xilun2019review}. However, they are challenging to implement in practice because obtaining a precise nonlinear model is difficult, and the various measurements required by this approach are hard to achieve \cite{ollero2021past}.

\begin{figure}[!t]	
		\centering
		\includegraphics[width=\linewidth]{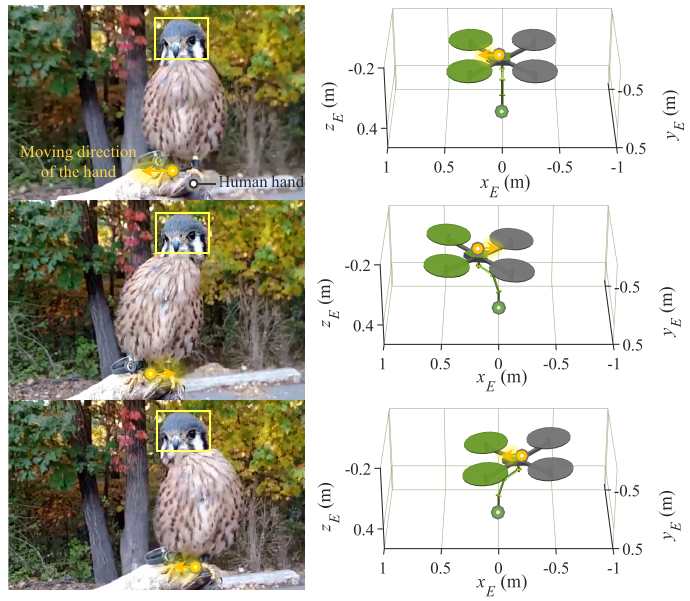}
	\caption{The visual description of the avian-inspired end-effector pose stabilization. Left: the head stabilization of a falcon\protect\footnotemark[1]. Right: the end-effector stabilization of the proposed aerial manipulator.}
	\label{fig_visual_description_avian_head}
\end{figure}
\footnotetext[1]{https://www.youtube.com/watch?v=JGArTWOJtXs}

Decoupled control methods treat the multirotor and the robotic arm as two separate entities and design the controller independently. In early works, the dynamic coupling between the multirotor and the robotic arm was ignored to simplify controller design \cite{Thomas_2014}. Existing control methods for multirotors and robotic arms were directly applied separately to each part. However, due to the influence of dynamic coupling, the control accuracy of this approach is relatively low. To achieve higher control accuracy, estimation methods have been introduced to estimate the forces and torques from the dynamic coupling \cite{cao2023eso, bodie2021dynamic, zhang2019robust}. Nevertheless, constrained by the physical performance of the aerial manipulator, these methods currently can only achieve control precision at the centimeter level.

The coordinated control allocates the motion of the quadcopter and the robotic to track the desired trajectory of the end-effector. It also has been studied to achieve higher precision end-effector control. In early works, closed-loop inverse kinematics (CLIK) was used, but it only utilized current state information without predicting future states \cite{baizid2017behavioral}. To improve accuracy, model predictive coordinated control methods have been proposed, leveraging predictive information of future states \cite{cao2020predictive, lunni2017nonlinear}. However, these methods do not consider tracking errors of the motion controller \cite{danko2015parallel}. Due to the kinematic characteristics of aerial manipulators, tracking errors of the multirotor are magnified at the end-effector.  To address this issue,  the tracking errors of the multirotor are considered in coordinated control, which enables end-effector errors to be maintained below 1 cm \cite{wang2023millimeter}. Although new coordinated control algorithms can improve precision, they are limited by the flight platform control accuracy and the mechanical characteristics of the robotic arm.

Mechanical design is another factor influencing the control precision. In early works, existing ground-mounted robotic arms were directly mounted onto multirotors to form aerial manipulators \cite{huber2013first}. However, unlike ground-mounted robotic arms, the motion of the robotic arm in aerial manipulators interferes with the motion of the multirotor. Using ground-based robotic arm solutions does not optimize the overall system performance. To reduce the impact of dynamic coupling, lightweight robotic arms have been employed in aerial manipulators \cite{suarez2017lightweight,bodie2021dynamic,meng2022aerial}. However, current robotic arm designs are task-specific. Determining the length of the robotic arm requires extensive engineering experience, as there is no unified criterion for determining the dimensions of the arm.

The second group aims to endow aerial manipulators with specific animal abilities. For example, to emulate the quick maneuvering and handling capabilities of animals, researchers developed a drone grasping system \cite{Thomas_2014}. Additionally, to mimic the perching ability of birds, bird-like leg mechanisms were designed \cite{hang2019perching,roderick2021bird}. Another example is the development of an aerial manipulator system with additive manufacturing capabilities to replicate the collective construction abilities of wasps \cite{zhang2022aerial}. These methods imitate animal capabilities found in nature. Nevertheless, drawing inspiration from nature is still essential for designing aerial manipulators. However, research in this area remains insufficient.

The above analysis reveals the limitations of the two groups of existing methods for aerial manipulators. This study aims to achieve avian-like manipulation capabilities in aerial manipulators from both mechanical design and algorithmic perspectives (see Fig.~\ref{fig_visual_description_avian_head}). Avian head stabilization is a fascinating phenomenon. Avians maintain the spatial stability of their eyes by adjusting the extension and rotation of their necks. This keeps their heads fixed relative to the ground during body movement, ensuring stable vision \cite{friedman1975visual,frost2009bird,katzir2001head}. Existing aerial manipulators have not yet achieved similar capabilities. To realize the end-effector stabilization inspired by avian head stabilization, we propose a framework that includes mechanical design, motion control, and coordinated control methods. Our framework can be applied to aerial inspection and grasping tasks, improving image acquisition quality and grasping success rates. The novelty of the proposed framework is summarized below:

 1) We propose a novel design framework inspired by avian morphology for the robotic arm in aerial manipulators. Due to the structural and functional similarities between aerial manipulators and avians, avian morphology can inform robotic arm design. We use the ratio of body width to neck length in avians to determine the size proportions between the multirotor and the robotic arm. Subsequently, existing robotic arm configurations are employed to finalize the robotic arm design. In the absence of a unified standard for designing robotic arm sizes in aerial manipulation, this method provides a new guiding framework for aerial manipulator design.

2) We propose a new partially decoupled motion control method based on the RNE approach. In this method, the quadcopter base and the robotic arm are controlled separately. The flight controller adopts a nonlinear control method, while the robotic arm is controlled using the computed torque approach. 
An estimator based on the Recursive Newton-Euler (RNE) method is employed to estimate the dynamic coupling force and torque between the quadcopter base and the robotic arm. This estimator leverages the known dynamics model to provide more accurate estimates of the coupling force and torque. By integrating the RNE-based estimator with our nonlinear motion controller, we ensure system stability despite the influence of dynamic coupling. Compared to the control method without dynamic coupling estimation, the inverse dynamic control method \cite{pierri2018adaptive}, the  Proportional-Integral-Derivative control (PID) control method \cite{2013PD}, and the geometric control method \cite{2021switchable}, our proposed controller demonstrates superior tracking accuracy.

3) A dual-mode coordinated control approach is proposed to achieve high-precision end-effector tracking. This coordinated control method allocates motion between the quadcopter base and the robotic arm, comprising two modes: hover mode and cooperation mode. The hover mode is employed when the end-effector's motion range is limited, with the quadcopter base maintaining a hover state. The cooperation mode is activated when the end-effector's motion range is extensive, requiring both the quadcopter and robotic arm to jointly track the desired trajectory of the end-effector. Under this coordinated control algorithm, the robotic arm's motion can compensate for the quadcopter's position and attitude errors. This ensures stabilization of the end-effector pose, similar to avian head stabilization.

Finally, we validate the proposed robotic arm, the RNE-based nonlinear decoupled control scheme, and the coordinated control through three numerical experiments.

The remainder of this paper is structured as follows. The mathematical model and the control system overview used in this paper are given in Section~\ref{sec_model}. The manipulator design and error analysis are presented in Section~\ref{sec_mani_design}. Section~\ref{sec_flight_control} proposes the flight controller for the quadcopter base. The manipulator and Coordinated control is proposed in Section~\ref{sec_Manipulator_Coordinate}. Then, the experimental verification for the proposed methods is given in Section~\ref{sec_experiment}. Conclusions are drawn in Section~\ref{sec_conclusion}.

\section{Mathematical Model and Control System Overview}\label{sec_model}

This section introduces the mathematical models of the aerial manipulator and the overall control framework.

\begin{figure*}[!t]
		\centering
		\includegraphics[width=\textwidth]{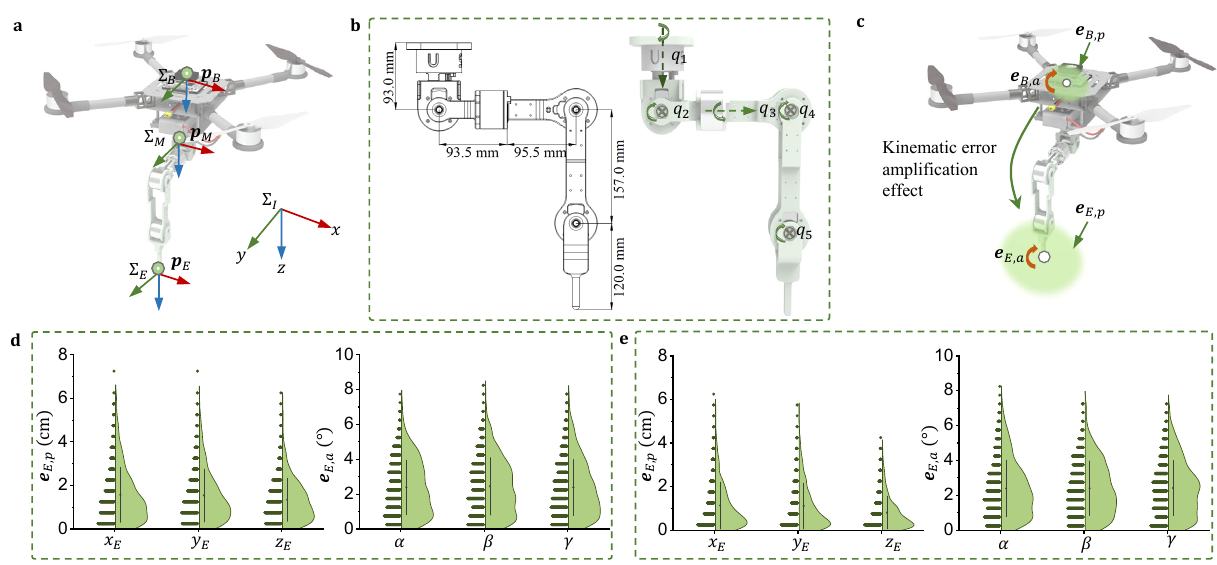}
 \caption{The coordinate frames, the visual depiction of the manipulator, and the illustration of the error amplification effect. \textbf{a}, The coordinate frames of the aerial manipulator. \textbf{b}, The manipulator's dimensions and configuration. \textbf{c}, Depiction of the error amplification effect. \textbf{d}, The distribution of end-effector position and attitude errors in the presence of base position and attitude errors. \textbf{e}, The distribution of end-effector position and attitude errors when only the base attitude error is present.}
  \label{am}
\end{figure*}

\subsection{Kinematics of The Aerial Manipulator}

As depicted in Fig.~\ref{am}a, the designed aerial manipulator consists of a quadcopter base and a 5-DoF robotic arm. This paper primarily utilizes four coordinate frames: the north-east-down (NED) inertial frame $\Sigma_I$, the body-fixed frame of the quadcopter $\Sigma_B$, the robotic arm frame $\Sigma_M$, and the end-effector frame $\Sigma_E$.


Let $\bm{R}_B \in SO(3)$ denote the rotation matrix of the quadcopter, while $\bm{\Phi}_B = [\phi, \theta, \psi]^T \in \mathbb{R}^3$ represents the attitude angle vector of the quadcopter. The quadcopter's position and the end-effector position are denoted by $\bm{p}_B, \bm{p}_E \in \mathbb{R}^3$, respectively. The end-effector position in $\Sigma_B$ is denoted by $\bm{p}_E^B \in \mathbb{R}^3$, which can be obtained through the forward kinematics of the manipulator \cite{edition2005introduction}. The relationship between $\bm{p}_E$ and $\bm{p}_E^B$ is expressed as
\begin{equation} 
\label{eq_pe}
	\bm{p}_E = \bm{p}_B + \bm{R}_B \bm{p}_E^B.
\end{equation}

The attitude angle vector of the end-effector is denoted by $\bm{\Phi}_E = [\alpha, \beta, \gamma]^T \in \mathbb{R}^3$. The rotation matrix of the end-effector is calculated as $\bm{R}_E= \bm{R}_x(\alpha)\bm{R}_y(\beta)\bm{R}_z(\gamma)$. Let $\bm{R}_E^B \in SO(3)$ denote the rotation matrix of the end-effector respective to $\Sigma_B$, which can be obtained through the forward kinematics of the robotic arm. The relationship between $\bm{R}_E$ and $\bm{R}_E^B$ is expressed as $\bm{R}_E = \bm{R}_B \bm{R}_E^B$.

Let $\bm{v}_E \in \mathbb{R}^3$ and $\bm{\omega}_E \in \mathbb{R}^3$ denote the velocity and the angular velocity of the end-effector, respectively. Similarly, $\bm{v}_B \in \mathbb{R}^3$ and $\bm{\omega}_B \in \mathbb{R}^3$ represent the quadcopter's velocity and angular velocity. The joint angle vector is denoted by $\bm{q}=[q_1,q_2,q_3,q_4,q_5]^T \in \mathbb{R}^5$. Let $\bm{p}_i \in \mathbb{R}^3$ denote the position of the origin of the $i$-th link frame. These link frames are obtained following the modified Denavit-Hartenberg (MDH) method \cite[Chapter 3]{edition2005introduction}. Let $\bm{z}_i$ denote the direction vector of the  $i$-th joint axis in  $\Sigma_I$. By differentiating \eqref{eq_pe}, the velocity of the end-effector is  
\begin{equation} 
\label{eq_ve}
	\bm{v}_E = \bm{v}_B - [\bm{R}_B \bm{p}_E^B]_{\times} \bm{\omega}_B + \bm{J}_v \dot{\bm{q}},
\end{equation}
where $[\cdot]_{\times}$ denotes the skew-symmetric matrix of a vector, $\bm{J}_v = [\bm{J}_1, \bm{J}_2, \dots, \bm{J}_i, \dots, \bm{J}_5]$, where $\bm{J}_i = \bm{z}_i \times (\bm{p}_E - \bm{p}_i), i \in 1, 2, \dots, 5$.
By differentiating $\bm{R}_E$, the angular velocity of the end-effector can be formulated as 
\begin{equation} 
\label{eq_oe}
	\bm{\omega}_E = \bm{\omega}_B + \bm{J}_o\dot{\bm{q}},
\end{equation}
where, $\bm{J}_o=[\bm{z}_1,\bm{z}_2,\bm{z}_3,\bm{z}_4,\bm{z}_5]$.

\subsection{Dynamics of the Quadcopter Base}
Let $ m_B $ and $ m_M $ be the mass of the quadcopter and the robotic arm, respectively. Then the total mass of the system is $m_S=m_B+m_M$.
The inertia tensor of the quadcopter in $\Sigma_B $ is denoted by $\bm{I}_B \in \mathbb{R}^{3 \times 3} $. The thrust is denoted by $ f $. The torque in $\Sigma_B $ is represented by $\bm{\tau}_B \in \mathbb{R}^3 $. 
The dynamics of the quadcopter base considers the dynamic coupling force $\bm{f}_D \in \mathbb{R}^3$ and torque $\bm{\tau}_D \in \mathbb{R}^3$. The external force and torque are denoted by $\bm f_{ext}$ and $\bm \tau_{ext}$, respectively.
According to \cite{cao2023eso}, the dynamics of the quadcopter base are
\begin{equation} 
\label{eq_dynamics_quad}
	\begin{aligned}
        & \dot{\bm{p}}_B = \bm{v}_B,\\
		&  \dot{\bm{v}}_B = -\frac{f}{m_S} \bm{R}_B \bm{e}_3 + g \bm{e}_3 + \frac{\bm{f}_D+ \bm f_{ext}}{m_S},\\
        & \dot{\bm{\Phi}}_B = \bm{Q}^{-1} \bm{\omega}_B^B,\\
 &  \dot{\bm{\omega}}_B^B = \bm{I}_B^{-1} (\bm{\tau}_B + \bm{\tau}_D^B+\bm \tau_{ext} - \bm{\omega}_B^B \times \bm{I}_B \bm{\omega}_B^B),
	\end{aligned}
\end{equation}
where $ \bm{e}_3=[0,0,1]^T$, $ g $ is the gravity acceleration, and $\bm{Q}$ is the transformation matrix from Euler angle rates to angular velocity, which is expressed as
\begin{equation}
    \begin{aligned}
    \bm{Q} =
    \begin{bmatrix}
        1 & 0 & -\sin(\theta) \\
        0 & \cos(\phi) & \cos(\theta) \sin(\phi) \\
        0 & -\sin(\phi) & \cos(\theta) \cos(\phi)
    \end{bmatrix}.
\end{aligned}
\end{equation}

The mass $ m_M $, the dynamic coupling force $\bm{f}_D$, and the torque $\bm{\tau}_D^B$ are contained in model \eqref{eq_dynamics_quad}. 
In the absence of the robotic arm,  we have $ m_M=0$, $\bm{f}_D=0$, and $\bm{\tau}_D^B=0$. Thus, the model \eqref{eq_dynamics_quad} degenerates to the dynamics of a standard quadcopter model.

\subsection{Control System Overview}

\begin{figure*}[!t]
	\centering
	\includegraphics[width=\textwidth]{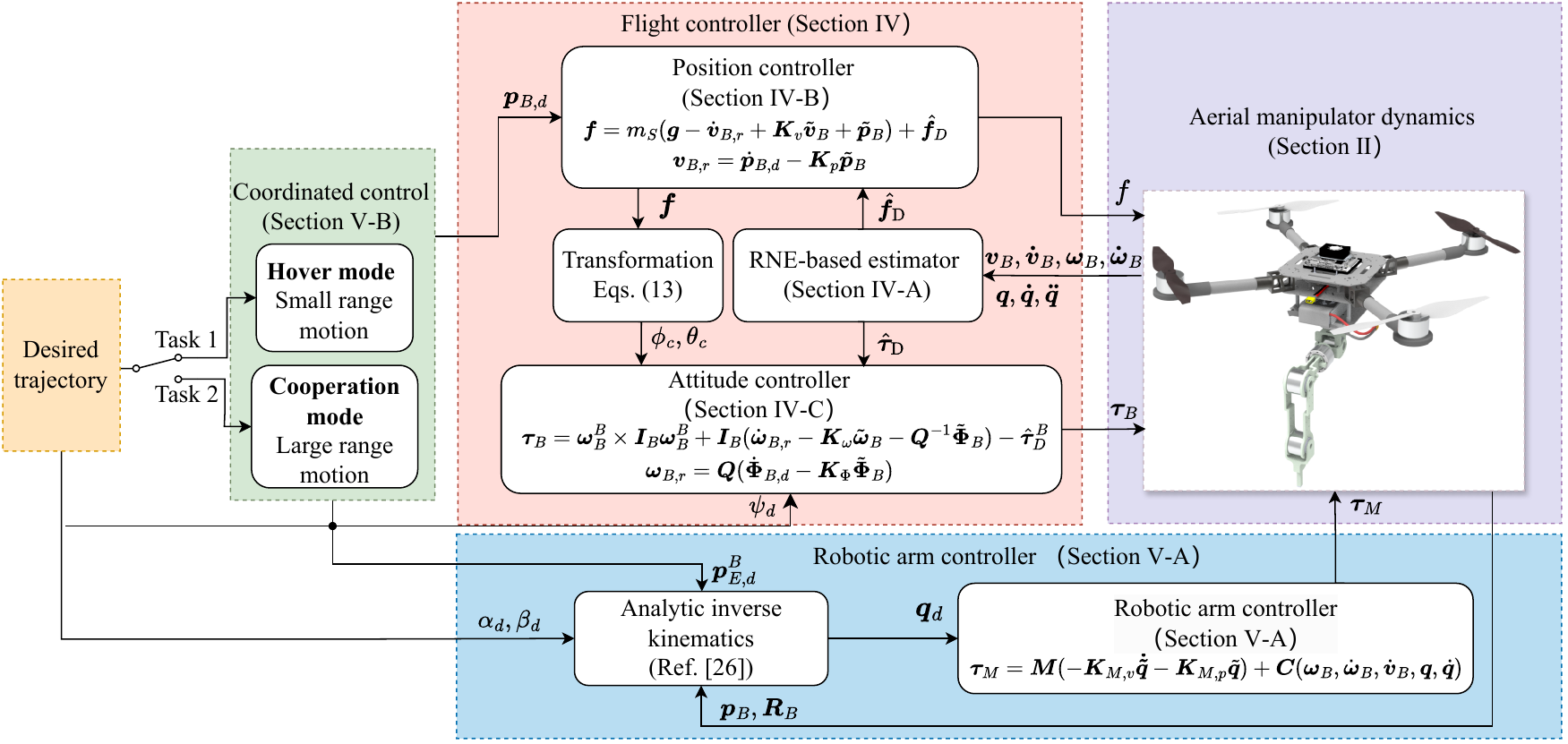}
	\caption{The proposed control scheme of the aerial manipulator system.}
	\label{control}
\end{figure*}
 The overall architecture of the control system is shown in Fig.~\ref{control}.  The system is decomposed into three components.
                
The first component is coordinated control. Coordinated control aims to produce the desired trajectories for both the quadcopter base and the robotic arm, thereby achieving accurate end-effector trajectory tracking.
Especially, it can maintain the end-effector at a fixed pose with high precision, similar to avian head stabilization. Coordinated control has two modes: hover mode and cooperation mode. Hover mode is suitable for small end-effector motion ranges, while cooperation mode is for larger ones.

The second component is the flight control. Its input is the desired trajectory of the quadcopter planned by the coordinated control. Its outputs are thrust commands $f$ and torque commands $\bm{\tau}_B$. This component can be further divided into two sub-components. The first utilizes the RNE method to estimate the dynamic coupling forces $\bm{f}_D$ and torque $\bm{\tau}_D$. The second is the nonlinear position and attitude control for the quadcopter base. The position control aims to generate the thrust commands $f$ to track the desired position $\bm{p}_{B,d}$, while the attitude control aims to generate the torque commands $\bm{\tau}_B$ to track the desired attitude $\bm{\Phi}_{B,d}$. 

The third component is the robotic arm control. Its inputs are the desired joint angles $\bm{q}_d$ and the quadcopter's velocities and accelerations. Its output is the joint torque command $\bm{\tau}_M \in \mathbb{R}^5$. 
This component consists of two parts. The first is the analytical inverse kinematics algorithm, which can rapidly calculate the desired joint angles given the known poses of the quadcopter and end-effector. The second is the computed torque controller, which ensures the asymptotic stability of the closed-loop dynamics for the robotic arm.

\section{Robotic Arm Design and Error Analysis} \label{sec_mani_design}
In this section, an avian-inspired design method is proposed for the robotic arm. Additionally, we analyze the workspace of the designed robotic arm and conduct a kinematic error analysis.
\subsection{Robotic Arm Design}

\begin{figure}[!t]
	\centering
	\includegraphics[width=\linewidth]{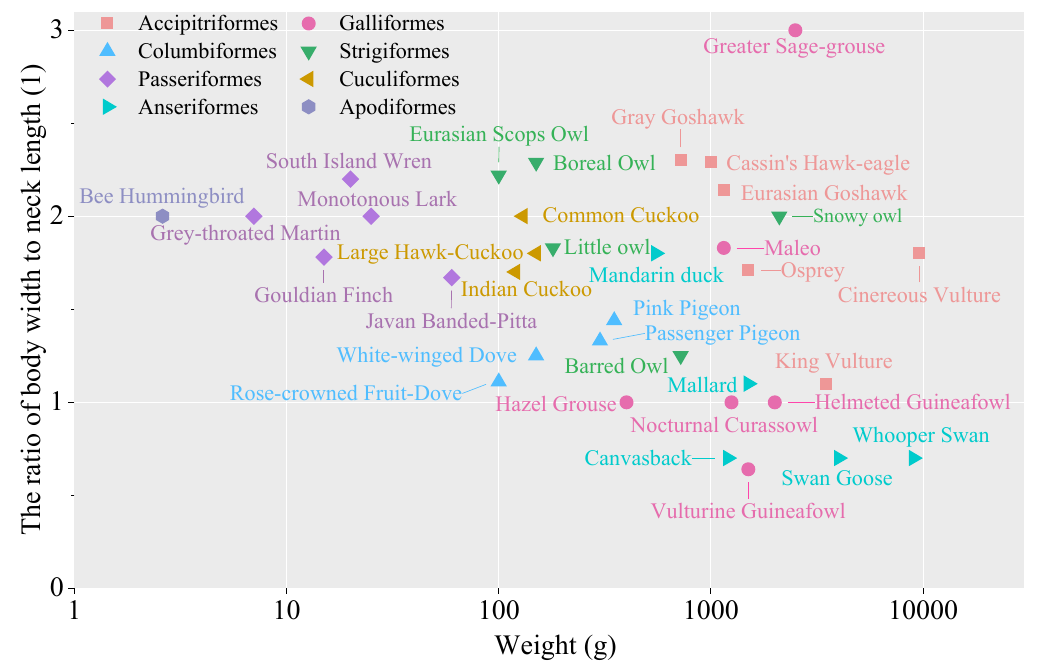}
	\caption{The ratio of body width to neck length for various birds. These birds come from 35 species across 8 orders. The horizontal axis uses a logarithmic scale to represent the average weight of bird populations for each species.}
	\label{avian_size}
\end{figure}

Currently, there is no unified guideline for designing robotic arms in aerial manipulators. The first approach involves directly using ground-mounted robotic arms, but it cannot ensure optimal overall system performance. The second approach designs robotic arms for specific tasks but requires extensive engineering experience. Unlike these approaches, we propose an avian-inspired design method for the robotic arm in aerial manipulators.

We determine the total length of the robotic arm based on avian morphological features. Specifically, the bird's body shares similarities with the quadcopter base, and the bird's neck is analogous to the robotic arm. Therefore, we derive the ratio of the quadcopter's wheelbase to the robotic arm's total length from the ratio of body width to neck length in birds. This process involves two steps. 
First,  we collect 88 frontal and lateral images of various birds from 35 species across 8 orders, sourced from the eBird website\footnotemark[2].
\footnotetext[2]{https://ebird.org/home.}
Second, we calculated the ratio by measuring pixel sizes in the images. The results are shown in Fig.~\ref{avian_size}. 
It can be seen that the weight of these birds ranges from 2~g to 10000~g, and the ratio ranges from 0.6 to 3. 
Among all avian species, most in the order Accipitriformes exhibit exceptional head stabilization, high agility, and strong maneuverability. Additionally, their body mass is comparable to that of quadcopters. Therefore, in this paper, we use the osprey's proportions to determine the ratio of the quadrotor's wheelbase to the total length of the robotic arm. This ratio is set at 1.7.
The wheelbase of the quadcopter used in this paper is 0.93~m. Therefore, the total length of the robotic arm is set as 0.55~m.

The geometric dimensions of the robotic arm are subsequently designed for aerial manipulators. First, we determine that the robotic arm will have five joints. In most tasks, there is no requirement for the end-effector attitude angle $\gamma$, as it can be provided by the quadcopter's yaw angle. Therefore, only five degrees of freedom (DoFs) of the end-effector need to be controlled, necessitating five joints for the robotic arm.
Second, the arrangement of these joints is established. As shown in Fig. \ref{am}b, the axes of the first three joints intersect at a single point, while the axes of the fourth and fifth joints are parallel. This configuration allows for a closed-form kinematic solution due to the three consecutive intersecting axes \cite{edition2005introduction}.
Third, the structural dimensioning of the links is designed to achieve the desired workspace for the robotic arm. In this paper, the desired workspace is designed to encompass a hemisphere with a radius of 0.5~m. Initially, the length of the first link is set to zero, while the remaining four links are assigned equal lengths that sum up to the total length of the robotic arm. These initial dimensions are then iteratively adjusted to ensure that the end-effector can reach all desired points within the hemisphere. The iterative process consists of two steps. The first step is to determine the workspace boundaries for the current structure size using the method described in \ref{workspace analysis}. The second step is to adjust each link's length accordingly. If there are internal points within the hemisphere that the end-effector cannot reach, the length of the last two links is increased and the length of the first three links is reduced. This process continues until the end-effector can successfully reach all the desired points within the hemisphere.
The final structural dimensioning of this robotic arm is illustrated in Fig.~\ref{am}b.

\begin{remark}
    Our proposed framework is for designing the robotic arm in aerial manipulators. The framework is not limited to the robotic arm configuration used in this paper. This framework can also be applied to other configurations.
\end{remark}

\subsection{Workspace analysis}\label{workspace analysis}

\begin{figure*}[!t]	
		\centering
		\includegraphics[width=\textwidth]{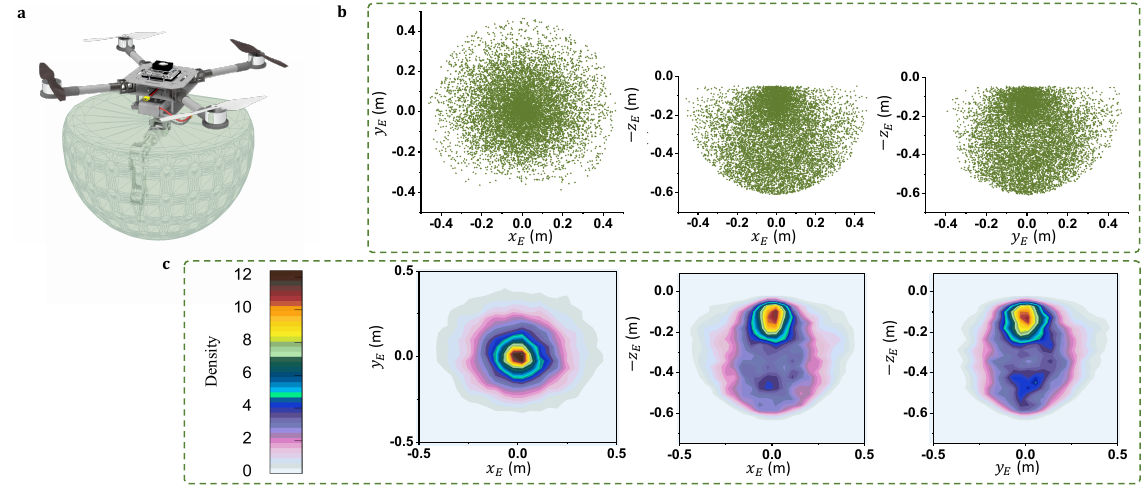}
	\caption{The workspace of the end-effector. \textbf{a}, The visual diagram of the workspace. \textbf{b}, Samples in \textit{x-y}, \textit{x-z}, and \textit{y-z} planes respectively. \textbf{c}, The position probability distribution via kernel density estimation.}
 	\label{workspace}
\end{figure*}

The workspace is a specification of the configurations that the end-effector of the robotic arm can reach. It is important for the control and planning of the robotic arm. It can be determined through numerical sampling in the configuration space. 
To avoid self-collision, the range of each joint is set as follows: $q_1, q_3, q_4, q_5$ are limited from $-3\pi/4$ to $3\pi/4$, while $q_2$ is restricted from $0$ to $3\pi/4$.
In numerical sampling, each joint is sampled following a uniform distribution within the corresponding joint limits. In this paper, ten thousand samples in the configuration space are selected to describe the workspace. 

The resulting samples for the robotic arm are illustrated in  Fig.~\ref{workspace}a,b. Kernel density estimation is employed to identify regions within the workspace that have a high probability of containing valid configurations. 
A Gaussian kernel function is used as the kernel function in this paper. The estimation result is given in Fig.~\ref{workspace}c.

\subsection{Kinematic Error Amplification Effect}\label{amplification}
The tracking errors of the quadcopter affect that of the end-effector through the forward kinematics \cite{10339889,wang2023millimeter}. Therefore, we introduce a Monte Carlo method to further quantitatively analyze the relationship between the tracking errors of the quadcopter and the end-effector. 

The right-hand subscript, e.g., $\bm{p}_{B,d}$, denotes the desired value of $\bm{p}_B$. Let $\bm{e}_{B,p}=\bm{p}_{B}-\bm{p}_{B,d} \in \mathbb{R}^3$ represent the error between the actual and desired positions of the quadcopter. According to \eqref{eq_pe}, we have  $\bm{p}_E=\bm{p}_{B,d}+\bm{e}_{B,p}+\bm{R}_B\bm{p}_E^B$.
Then, the end-effector's position error is 
\begin{equation}
\label{eq_e_pe}
        \bm{e}_{E,p}=\bm{p}_{E}-\bm{p}_{E,d}=\bm{e}_{B,p}+(\bm{R}_B-\bm{R}_{B,d})\bm{p}_E^B.
\end{equation}
In addition, the end-effector's rotation error can be calculated as 
\begin{equation}
\label{eq_e_re}
	\bm{E}_{E,a}=(\bm{R}_{E,d})^T \bm{R}_E=(\bm{R}_E^B)^T \bm{R}_{B,d}^T \bm{R}_B\bm{R}_E^B.
\end{equation}
Denote $\bm {e}_{E,a} \in \mathbb{R}^3$ as the attitude error vector of the end-effector. According to the definition of the rotation matrix, $\bm{e}_{E,a}$ can be calculated from $\bm {E}_{E,a}$. 

In the Monte Carlo method, the position error of the quadcopter follows a uniform distribution within the range [-0.2, 0.2]~m. The attitude error of the quadcopter follows a uniform distribution within the range [-5, 5]$^\circ$. The attitude angles of the quadcopter and the joint angles also follow a uniform distribution within the corresponding ranges, respectively. The sample size of the Monte Carlo method is 1000. The statistical results of these samples are illustrated in violin plots (see Fig.~\ref{am}d,e).

One can obtain the following conclusions from Fig.~\ref{am}d,e. The forward kinematics of the aerial manipulator significantly increases the end-effector's position error ( consistent with \eqref{eq_e_pe} ). The mean position error of the quadcopter is 1.91 cm, while the mean position error of the end-effector is 2.96 cm, amplified by about 1.5 times. With respect to the attitude error of the end-effector, there is no significant amplification effect here (both are 4.8$^{\circ}$). This is consistent with \eqref{eq_e_re}. 
Further analysis of \eqref{eq_e_pe} reveals that when only the quadcopter's position error exists, the end-effector's position error equals the quadcopter's, without any amplification effect.
However, when only the quadcopter's attitude error exists, the mean position error of the end-effector remains almost unchanged, but the standard deviation is amplified by 2.4 times (see Fig.~\ref{am}e).
These conclusions illustrate the kinematic amplification effect. However, \eqref{eq_e_pe} indicates once the arm's workspace is determined (i.e., the range of $\bm{p}_{E}^B$ is determined),  position errors of the quadcopter cannot be eliminated through mechanical design alone. Thus, we must focus on controller design to address this issue.

\section{Flight Controller Design}\label{sec_flight_control}
This section is divided into three parts: RNE-based dynamic coupling estimation, position control, and attitude control of the quadcopter base.

\subsection{RNE-Based Dynamic Coupling Estimation} \label{RNE}
The RNE method \cite[Section 6.5]{edition2005introduction} is adopted to estimate the dynamic coupling force $\bm{f}_D$ and torque $\bm{\tau}_D$. 
The estimating process consists of two steps.

The first step is the forward process, which computes the linear and angular acceleration of each link's center of mass (CoM). These computations are performed iteratively, starting with link 0 and progressing outward, link by link, to link $n$.
Consider the quadcopter as link 0. Let $\bm{p}_{M}^B \in \mathbb{R}^3$ represent the position of origin of the frame $\Sigma_M$ in $\Sigma_B$. For link 0, we have
\begin{equation}
    \begin{aligned}
\bm{\omega}_0^0&=\bm{\omega}_B^B,\ 
\dot{\bm{\omega}}_0^0=\dot{\bm{\omega}}_B^B,\ 
\bm{p}_{M}^0=\bm{p}_{M}^B,\\
\dot{\bm{v}}_0^0&=\bm{R}_B^T(\dot{\bm{v}}_B-[0,0,g]^T)+\dot{\bm{\omega}}_0^0\times \bm{p}_{M}^0 \\&+ \bm{\omega}_0^0 \times (\bm{\omega}_0^0 \times \bm{p}_{M}^0).
\end{aligned}
\end{equation}

Let $\bm{R}_{i}^{i+1}$ denote the rotation matrix from $\Sigma_i$ to $\Sigma_{i+1}$.
$\bm{z}=[0,0,1]^T$ is a unit vector.
The angular velocity $\bm{\omega}_{i+1}^{i+1}$ and the angular acceleration $\dot{\bm{\omega}}_{i+1}^{i+1}$ of link $i+1$ are calculated as
\begin{equation}
    \begin{aligned}
    \bm{\omega}_{i+1}^{i+1} &= \bm{R}_i^{i+1}\bm{\omega}_{i}^i+\dot{q}_{i+1} \bm{z},\\
    \dot{\bm{\omega}}_{i+1}^{i+1} &= \bm{R}_i^{i+1}\dot{\bm{\omega}}_{i}^i+\ddot{q}_{i+1} \bm{z}+\dot{q}_{i+1} (\bm{R}_i^{i+1}\bm{\omega}_i^i) \times \bm{z}.
\end{aligned}
\end{equation}

Let $\bm{p}_{i+1}^i \in \mathbb{R}^3$ represent the position of the origin of $\Sigma_{i+1}$ in $\Sigma_{i}$, and 
$\bm{p}_{C_i}^i \in \mathbb{R}^3$ represent the position of the CoM of link $i$.
The linear acceleration of the origin and the CoM of link $i+1$ are calculated as
\begin{equation}
    \begin{aligned}
    \dot{\bm{v}}_{i+1}^{i+1} &= \bm{R}_i^{i+1}(\dot{\bm{v}}_i^i+\dot{\bm{\omega}}_i^i\times \bm{p}_{i+1}^i+\bm{\omega}_i^i\times(\bm{\omega}_i^i \times \bm{p}_{i+1}^i)),\\
    \dot{\bm{v}}_{C_{i+1}}^{i+1} &= \dot{\bm{v}}_{i+1}^{i+1}+\dot{\bm{\omega}}_{i+1}^{i+1}\times \bm{p}_{C_{i+1}}^{i+1}\\
    &+\bm{\omega}_{i+1}^{i+1}\times(\bm{\omega}_{i+1}^{i+1}\times \bm{p}_{C_{i+1}}^{i+1}).
\end{aligned}
\end{equation}

The second step is the backward process, which computes the internal force and torque exerted on link $i$ by link $i-1$. These computations are performed link by link, starting with link $n$ and progressing inward toward the base of the robotic arm.
We calculate the net force $\bm{\zeta}_{i+1}^{i+1} \in \mathbb{R}^3$ and net torque $\bm{\chi}_{i+1}^{i+1} \in \mathbb{R}^3$ acting at the CoM of link $i+1$. They are given as
\begin{equation}
    \begin{aligned}
&\bm{\zeta}_{i+1}^{i+1}=m_{i+1}\dot {\bm{v}}_{C_{i+1}}^{i+1},\\
&\bm{\chi}_{i+1}^{i+1}=\bm{I}_{i+1}\dot {\bm{\omega}}_{i+1}^{i+1}+\bm{\omega}_{i+1}^{i+1}\times \bm{I}_{i+1}\bm{\omega}_{i+1}^{i+1},
\end{aligned}
\end{equation}
where $m_{i+1}$ represents the mass of link $i+1$, and $\bm{I}_{i+1} \in \mathbb{R}^{3\times3}$ represents the moment of inertia of link $i+1$. Let $\bm{f}_i^i \in \mathbb{R}^3$ and $\bm{\tau}_i^i \in \mathbb{R}^3$ represent force and torque exerted on link $i$ by link $i-1$ in $\Sigma_I$. They can be calculated by force-balance and torque-balance relationship:
\begin{equation}
\begin{aligned}
\bm{f}_i^i&=\bm{\zeta}_i^i+\bm{R}_{i+1}^i \bm{f}_{i+1}^{i+1},\\
\bm{n}_i^i&=\bm{\chi}_i^i+\bm{R}_{i+1}^i\bm{n}_{i+1}^{i+1}+\bm{p}_{C_i}^i\times \bm{\zeta}_i^i\\
&+\bm{p}_{i+1}^i\times (\bm{R}_{i+1}^i \bm{f}_{i+1}^{i+1}).
\end{aligned}
\end{equation}

In this paper, the end-effector is not in contact with the external environment, so the contact force and torque are zero, i.e., $\bm{f}_6^6=0$ and $\bm{\tau}_6^6=0$. 
The dynamic coupling force and torque exerted by the arm on the quadcopter at $\bm{p}_M$ are $-\bm{f}_1^1$ and $-\bm{n}_1^1$. Transforming the force $-\bm{f}_1^1$ into the $\Sigma_I$  and the torque $-\bm{n}_1^1$ into the $\Sigma_B$, we can obtain 
\begin{equation} 
\begin{aligned}
&\bm{f}_{D}=-\bm{R}_B\bm{R}_1^B\bm{f}_1^1-m_Mg\bm{e}_3,\\
&\bm{\tau}_{D}^B=-\bm{p}_{M}^B\times(\bm{R}^B_1 \bm{f}_1^1)-\bm{R}^B_1\bm{n}_1^1.
\end{aligned}
\label{eq_fd_tn}
\end{equation}
In \eqref{eq_fd_tn}, the gravity term $m_Mg\bm{e}_3$ is introduced since the gravity of the robotic arm has been considered in the model \eqref{eq_dynamics_quad}.

\subsection{Position control}\label{sec_positioncontrol}
The position error of the quadcopter is defined as $\tilde{\bm{p}}_B=\bm{p}_B-\bm{p}_{B,d}$. To depict the desired velocity of the quadcopter, we define 
\begin{equation}
	\bm{v}_{B,r}= \dot{\bm{p}}_{B,d} - \bm{K}_p \tilde{\bm{p}}_B,
	\label{eq_vr}
\end{equation}
where $\bm{K}_p\in\mathbb{R}^{3\times 3}$ is a positive diagonal matrix. Then, the velocity error of the quadcopter is defined as $\tilde{\bm{v}}_B=\dot{\bm{p}}_B-\bm{v}_{B,r}$.
Let $\hat{\bm{f}}_{D}$ denote the estimate of the dynamic coupling force. Then, the position controller of the quadcopter is defined as 
\begin{equation}
	\bm{f}=m_S\left(\bm{g} - \dot{\bm{v}}_{B,r}  + \bm{K}_v\tilde{\bm{v}}_B+ \tilde{\bm{p}}_B+ \hat{\bm{f}}_{D}\right),
	\label{eq_posi_cont}
\end{equation}
where  $\bm{K}_v\in\mathbb{R}^{3\times 3}$ is a positive diagonal matrix.
\begin{theorem}\label{theorem_posi}
	Assuming that the estimated error of the dynamic coupling force and the external disturbance force is bounded, i.e., $\|{\bm{f}}_{D} - \hat{\bm{f}}_{D}\| \leq \delta_{f,D}$ and $\|\bm f_{ext}\| \leq \delta_{f, ext}$, if the position controller is designed as \eqref{eq_posi_cont} with the proposed RNE-based dynamic coupling estimator, then the position error $\tilde{\bm{p}}_B$ is bounded, i.e.,  $\| \tilde{\bm{p}}_B\|\leq ({\delta_{f,D}}+\delta_{f, ext} )/{\lambda_p}$ as $t\rightarrow\infty$, where $ {\delta_{f,D}}$, $\delta_{f, ext}$ and ${\lambda_p}$ are positive constant parameters.
\end{theorem}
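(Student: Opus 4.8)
\section*{Proof proposal}

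The plan is to substitute the position controller \eqref{eq_posi_cont} into the translational dynamics \eqref{eq_dynamics_quad}, reduce the closed loop to a linear error system driven by a bounded residual, and then close the argument with a single quadratic Lyapunov function. First I would assume, as is standard for this cascaded scheme, that the attitude loop realizes the commanded thrust direction, so that the applied force $-f\bm{R}_B\bm{e}_3$ coincides with the vector commanded in \eqref{eq_posi_cont}. Using $\tilde{\bm{p}}_B=\bm{p}_B-\bm{p}_{B,d}$, $\tilde{\bm{v}}_B=\dot{\bm{p}}_B-\bm{v}_{B,r}$ and $\bm{v}_{B,r}=\dot{\bm{p}}_{B,d}-\bm{K}_p\tilde{\bm{p}}_B$, the kinematic identity $\dot{\tilde{\bm{p}}}_B=\tilde{\bm{v}}_B-\bm{K}_p\tilde{\bm{p}}_B$ follows immediately, and substituting the controller into the $\dot{\bm{v}}_B$ equation yields
\begin{equation*}
\dot{\tilde{\bm{v}}}_B=-\bm{K}_v\tilde{\bm{v}}_B-\tilde{\bm{p}}_B+\bm{d},\qquad \bm{d}:=\frac{1}{m_S}\big((\bm{f}_D-\hat{\bm{f}}_D)+\bm{f}_{ext}\big),
\end{equation*}
where the two assumed bounds give $\|\bm{d}\|\le(\delta_{f,D}+\delta_{f,ext})/m_S$.

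Next I would take the Lyapunov candidate $V=\tfrac12\|\tilde{\bm{p}}_B\|^2+\tfrac12\|\tilde{\bm{v}}_B\|^2$ and differentiate along the error dynamics. The indefinite cross terms cancel exactly,
\begin{align*}
\dot{V}&=\tilde{\bm{p}}_B^T(\tilde{\bm{v}}_B-\bm{K}_p\tilde{\bm{p}}_B)+\tilde{\bm{v}}_B^T(-\bm{K}_v\tilde{\bm{v}}_B-\tilde{\bm{p}}_B+\bm{d})\\
&=-\tilde{\bm{p}}_B^T\bm{K}_p\tilde{\bm{p}}_B-\tilde{\bm{v}}_B^T\bm{K}_v\tilde{\bm{v}}_B+\tilde{\bm{v}}_B^T\bm{d}.
\end{align*}
This cancellation is precisely what the $+\tilde{\bm{p}}_B$ feedback term in \eqref{eq_posi_cont} is designed to produce, and it is the feature that lets a single $V$ govern both error channels at once.

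With $\lambda:=\min\{\lambda_{\min}(\bm{K}_p),\lambda_{\min}(\bm{K}_v)\}>0$ and Cauchy--Schwarz on the last term, I would obtain $\dot{V}\le-2\lambda V+\|\tilde{\bm{v}}_B\|\,\|\bm{d}\|$. Introducing $W=\sqrt{V}$ and using $\|\tilde{\bm{v}}_B\|\le\sqrt{2V}=\sqrt{2}\,W$ converts this into the scalar linear inequality $\dot{W}\le-\lambda W+\|\bm{d}\|/\sqrt{2}$. The comparison lemma then gives the ultimate bound $\limsup_{t\to\infty}W(t)\le\|\bm{d}\|/(\sqrt{2}\lambda)$, so that $\|\tilde{\bm{p}}_B\|\le\sqrt{2}\,W\le\|\bm{d}\|/\lambda\le(\delta_{f,D}+\delta_{f,ext})/(m_S\lambda)$ as $t\to\infty$. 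Setting $\lambda_p=m_S\lambda$ recovers exactly the claimed bound.

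I expect the main obstacle to be not the Lyapunov computation, which is routine once the cross terms cancel, but justifying the reduction to the force-level closed loop: the genuine inputs are the scalar thrust $f$ and the attitude, so the step identifying $-f\bm{R}_B\bm{e}_3$ with the commanded vector presumes the attitude controller of the next subsection delivers the required thrust direction. A fully rigorous treatment would either invoke a timescale-separation/cascade argument between the two loops or carry the attitude-tracking error as an additional term inside $\bm{d}$ and verify that it too stays bounded. A secondary technicality is the non-smoothness of $W=\sqrt{V}$ at the origin when applying the comparison lemma, which is handled in the usual way by restricting to the region $V>0$ and concluding uniform ultimate boundedness.
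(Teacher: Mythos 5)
Your proposal follows essentially the same route as the paper's own proof: the identical quadratic Lyapunov function $V=\tfrac12\|\tilde{\bm{p}}_B\|^2+\tfrac12\|\tilde{\bm{v}}_B\|^2$, the same cross-term cancellation produced by the $+\tilde{\bm{p}}_B$ feedback, the same $W=\sqrt{V}$ reduction with the comparison lemma, and the same ultimate bound. If anything, you are more careful than the paper on two points it silently glosses over --- the $1/m_S$ scaling of the disturbance residual (which you absorb into $\lambda_p=m_S\lambda$) and the cascade assumption that the attitude loop realizes the commanded thrust direction --- so the proposal is correct and matches the paper's argument.
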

\begin{proof}
	A candidate Lyapunov function is given as 
	\begin{equation}
			V_{p}(\bm{r}_p)=\frac{1}{2}\tilde{\bm{p}}_B^T\tilde{\bm{p}}_B+\frac{1}{2}\tilde{\bm{v}}_B^T\tilde{\bm{v}}_B,
	\end{equation}
  where $\bm{r}_p=[\tilde{\bm{p}}_B^T,\tilde{\bm{v}}_B^T]^T$. The time derivative of $V_p$ is given as 
  \begin{equation}
\begin{aligned} 
\dot{V}_p=&\tilde{\bm{p}}_B^T\dot{\tilde{\bm{p}}}_B+\tilde{\bm{v}}_B^T\dot{\tilde{\bm{v}}}_B \\
		=&\tilde{\bm{p}}_B^T(\dot{\bm{p}}_B-\dot{\bm{p}}_{B,d}+\bm{K}_p\tilde{\bm{p}}_B-\bm{K}_p\tilde{\bm{p}}_B)+\tilde{\bm{v}}_B^T\dot{\tilde{\bm{v}}}_B \\
		=& -\tilde{\bm{p}}_B^T\bm{K}_p\tilde{\bm{p}}_B+ \tilde{\bm{p}}_B^T\tilde{\bm{v}}_B\\
  &+\tilde{\bm{v}}_B^T(\bm{f}_{D} - \hat{\bm{f}}_{D} +\bm f_{ext}-\tilde{\bm{p}}_B-\bm{K}_v\tilde{\bm{v}}_B)\\
		=& -\tilde{\bm{p}}_B^T\bm{K}_p\tilde{\bm{p}}_B -\tilde{\bm{v}}_B^T\bm{K}_v\tilde{\bm{v}}_B\\
 & +\tilde{\bm{v}}_B^T(\bm{f}_{D} - \hat{\bm{f}}_{D}+\bm f_{ext}).
\end{aligned}
\end{equation}

Note that we assume $\|{\bm{f}}_{D} - \hat{\bm{f}}_{D}\| \leq \delta_{f,D}$ and $\|\bm f_{ext}\| \leq \delta_{f,ext}$. Using the comparison theorem \cite[Section~9.3]{khalil2002nonlinear}, we define $W_p=\sqrt{V_p}=\frac{1}{\sqrt{2}}\left\|\bm{r}_p\right\| $, where $\bm{r}_p=[\tilde{\bm{p}}_B^T,\tilde{\bm{v}}_B^T]^T$. After finite time $T_p$, we can obtain
\begin{equation}
    \begin{aligned} 
    \dot{W}_p&=\frac{-\tilde{\bm{p}}_B^T\bm{K}_p\tilde{\bm{p}}_B -\tilde{\bm{v}}_B^T\bm{K}_v\tilde{\bm{v}}_B+\tilde{\bm{v}}_B^T(\bm{f}_{D} - \hat{\bm{f}}_{D} +\bm f_{ext} ) }{\sqrt{2}\left\|\bm{r}_p\right\| }\\
    &\leq \frac{-\lambda_p\left\|\bm{r}_p\right\|^2+(\delta_{f,D}+\delta_{f,ext})\left\|\bm{r}_p\right\| }{\sqrt{2}\left\|\bm{r}_p\right\| }\\
    &=-\lambda_pW_p+\frac{\delta_{f,D} +\delta_{f,ext}}{\sqrt{2}},
\end{aligned}
\end{equation}

where $\lambda_p$ is the minimum eigenvalue of $\bm{K}_p$ and $\bm{K}_v$.  Since $\bm{K}_p$ and $\bm{K}_v$ are positive diagonal matrices, we have $\lambda_p>0$.

According to the BIBO stability \cite[Section~5.2]{khalil2002nonlinear}, we have $\left| W_p\right| \leq({\delta_{f,D}+\delta_{f,ext}})/(\sqrt{2}\lambda_p)$ as $t\rightarrow \infty$. With the definition of $W_p$, we have $\|\bm{r}_p\| \leq ({\delta_{f,D}} +\delta_{f,ext})/{\lambda_p}$ as $t\rightarrow \infty$. According to the definition of $\bm{r}_p$, we have $\| \tilde{\bm{p}}_B\|\leq ({\delta_{f,D}} +\delta_{f,ext})/{\lambda_p}$ as $t\rightarrow\infty$.
\end{proof}

 Consequently, the desired total thrust $f$ and desired attitude angles can be computed as 
 \begin{equation}
	\begin{aligned} 
 f &=\left\| \bm{f}\right\|,\\
\phi_c&=\text{asin}\left( \frac{f_1\sin\psi_d - f_2\cos\psi_d}{f} \right) ,\\
\theta_c&=\text{atan}\left( \frac{f_1\cos\psi_d + f_2\sin\psi_d}{f_3} \right), 
	\end{aligned}
\end{equation}
 where $f_i$ is the $i$-th element of $\bm{f}$.
 
\subsection{Attitude control}\label{sec_attcontrol}
The attitude error of the quadcopter is defined as $\tilde{\bm{\Phi}}_B=\bm{\Phi}_B-\bm{\Phi}_{B,d} \in \mathbb{R}^3$. To depict the desired angular velocity of the quadcopter, we define 
\begin{equation}
	\bm{\omega}_{B,r}=\bm{Q}(\dot{\bm{\Phi}}_{B,d}-\bm{K}_{\Phi}\tilde{\bm{\Phi}}_B),
	\label{eq_omega_r}
\end{equation}
where $\bm{K}_{\Phi}\in\mathbb{R}^{3\times 3}$ is a positive diagonal matrix. Then, the angular velocity error of the quadcopter is defined as $\tilde{\bm{\omega}}_B=\bm{\omega}_B^B-\bm{\omega}_{B,r}$.
Let $\hat{\bm{\tau}}_{D}$ denote the estimate of the dynamic coupling torque. The attitude controller of the quadcopter is given as 
\begin{equation}
\begin{aligned}
\bm{\tau}_B&=\bm{\omega}_B^B\times\bm{I}_B\bm{\omega}_B^B\\
&+\bm{I}_B(\dot{\bm{\omega}}_{B,r}-\bm{K}_{\omega}\tilde{\bm{\omega}}_B-\bm{Q}^{-1}\tilde{\bm{\Phi}}_B-\hat{\bm{\tau}}_{D}^B),
	\label{eq_cont_att}
 \end{aligned}
\end{equation}
where $\bm{K}_{\omega}\in\mathbb{R}^{3\times 3}$ is a positive diagonal matrix.

\begin{theorem}
	\label{theorem_att}
	Assuming that the estimated error of the dynamic coupling torque and the external disturbance torque is bounded, i.e., $\|{\bm{\tau}}_{D}^B - \hat{\bm{\tau}}_{D}^B\| \leq \delta_{\tau,D}$ and $ \|\bm \tau_{ext} \| \le\delta_{\tau, ext}$,  if the attitude controller is designed as \eqref{eq_cont_att} with the proposed RNE-based dynamic coupling estimator, then the attitude error $\tilde{\bm{\Phi}}_B$ is bounded, i.e., $\| \tilde{\bm{\Phi}}_B\|\leq ({\delta_{\tau,D}}+ \delta_{\tau,ext})/{\lambda_a}$  as $t\rightarrow\infty$, where $ {\delta_{\tau,D}}$, $\delta_{\tau, ext}$  and ${\lambda_a}$ are positive constant parameters.
\end{theorem}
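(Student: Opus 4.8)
The plan is to reproduce, in the rotational setting, the Lyapunov-plus-comparison-lemma argument already used for Theorem~\ref{theorem_posi}, since the attitude controller \eqref{eq_cont_att} is the rotational analogue of the position controller \eqref{eq_posi_cont}. First I would propose the candidate Lyapunov function
\begin{equation}
V_a(\bm{r}_a)=\frac{1}{2}\tilde{\bm{\Phi}}_B^T\tilde{\bm{\Phi}}_B+\frac{1}{2}\tilde{\bm{\omega}}_B^T\tilde{\bm{\omega}}_B,
\end{equation}
with $\bm{r}_a=[\tilde{\bm{\Phi}}_B^T,\tilde{\bm{\omega}}_B^T]^T$, mirroring $V_p$.

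Second, I would derive the closed-loop error dynamics. Using $\dot{\bm{\Phi}}_B=\bm{Q}^{-1}\bm{\omega}_B^B$ together with the definition \eqref{eq_omega_r} of $\bm{\omega}_{B,r}$ and $\bm{\omega}_B^B=\tilde{\bm{\omega}}_B+\bm{\omega}_{B,r}$, a direct substitution gives $\dot{\tilde{\bm{\Phi}}}_B=\bm{Q}^{-1}\tilde{\bm{\omega}}_B-\bm{K}_{\Phi}\tilde{\bm{\Phi}}_B$. Substituting the controller \eqref{eq_cont_att} into the rotational dynamics of \eqref{eq_dynamics_quad}, the feedforward term $\bm{\omega}_B^B\times\bm{I}_B\bm{\omega}_B^B$ and the inertia factor $\bm{I}_B$ cancel, leaving (up to the $\bm{I}_B^{-1}$ scaling of the residual torque, discussed below)
\begin{equation}
\dot{\tilde{\bm{\omega}}}_B=-\bm{K}_{\omega}\tilde{\bm{\omega}}_B-\bm{Q}^{-1}\tilde{\bm{\Phi}}_B+(\bm{\tau}_D^B-\hat{\bm{\tau}}_D^B)+\bm{\tau}_{ext}.
\end{equation}

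Third, I would compute $\dot V_a=\tilde{\bm{\Phi}}_B^T\dot{\tilde{\bm{\Phi}}}_B+\tilde{\bm{\omega}}_B^T\dot{\tilde{\bm{\omega}}}_B$. Assuming the indefinite cross terms $\tilde{\bm{\Phi}}_B^T\bm{Q}^{-1}\tilde{\bm{\omega}}_B$ and $-\tilde{\bm{\omega}}_B^T\bm{Q}^{-1}\tilde{\bm{\Phi}}_B$ cancel, this reduces to $\dot V_a=-\tilde{\bm{\Phi}}_B^T\bm{K}_{\Phi}\tilde{\bm{\Phi}}_B-\tilde{\bm{\omega}}_B^T\bm{K}_{\omega}\tilde{\bm{\omega}}_B+\tilde{\bm{\omega}}_B^T\big((\bm{\tau}_D^B-\hat{\bm{\tau}}_D^B)+\bm{\tau}_{ext}\big)$. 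From here the reasoning is verbatim that of Theorem~\ref{theorem_posi}: define $W_a=\sqrt{V_a}=\|\bm{r}_a\|/\sqrt{2}$, lower-bound the quadratic terms by $\lambda_a\|\bm{r}_a\|^2$ with $\lambda_a$ the minimum eigenvalue of $\bm{K}_{\Phi}$ and $\bm{K}_{\omega}$, apply $\|\bm{\tau}_D^B-\hat{\bm{\tau}}_D^B\|\le\delta_{\tau,D}$, $\|\bm{\tau}_{ext}\|\le\delta_{\tau,ext}$ and Cauchy--Schwarz to obtain $\dot W_a\le-\lambda_a W_a+(\delta_{\tau,D}+\delta_{\tau,ext})/\sqrt{2}$, and then invoke the comparison lemma and BIBO stability to conclude $\|\bm{r}_a\|\le(\delta_{\tau,D}+\delta_{\tau,ext})/\lambda_a$, hence $\|\tilde{\bm{\Phi}}_B\|\le(\delta_{\tau,D}+\delta_{\tau,ext})/\lambda_a$ as $t\rightarrow\infty$.

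The hard part will be the cross-term cancellation, which is the only place the rotational proof genuinely differs from the translational one. In Theorem~\ref{theorem_posi} the analogous terms $\tilde{\bm{p}}_B^T\tilde{\bm{v}}_B$ cancel trivially because the kinematic coupling there is the identity; here the coupling is $\bm{Q}^{-1}$, and since the given $\bm{Q}$ is not symmetric one has $\tilde{\bm{\Phi}}_B^T\bm{Q}^{-1}\tilde{\bm{\omega}}_B-\tilde{\bm{\omega}}_B^T\bm{Q}^{-1}\tilde{\bm{\Phi}}_B=\tilde{\bm{\omega}}_B^T(\bm{Q}^{-T}-\bm{Q}^{-1})\tilde{\bm{\Phi}}_B$, which does not vanish in general. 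I would resolve this either by arguing that the coupling term in \eqref{eq_cont_att} should be read as $\bm{Q}^{-T}\tilde{\bm{\Phi}}_B$ (making the cancellation exact), or by restricting attention to a near-hover regime where $\bm{Q}\approx\bm{I}$ and absorbing the residual into the bounded-disturbance term. A secondary bookkeeping point is the factor $\bm{I}_B^{-1}$ that multiplies the residual torque after substitution; I would fold $\|\bm{I}_B^{-1}\|$ into the constants $\delta_{\tau,D}$ and $\delta_{\tau,ext}$ so that the final ultimate bound retains the stated form.
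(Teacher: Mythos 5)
Your proposal follows essentially the same route as the paper's own proof: the identical Lyapunov function $V_a(\bm{r}_a)=\frac{1}{2}\tilde{\bm{\Phi}}_B^T\tilde{\bm{\Phi}}_B+\frac{1}{2}\tilde{\bm{\omega}}_B^T\tilde{\bm{\omega}}_B$, the same reduction of $\dot V_a$ to negative quadratic terms plus a bounded residual, and the same $W_a=\sqrt{V_a}$ comparison-lemma/BIBO argument giving $\|\tilde{\bm{\Phi}}_B\|\leq(\delta_{\tau,D}+\delta_{\tau,ext})/\lambda_a$. You are in fact more careful than the paper, whose proof silently cancels $\tilde{\bm{\Phi}}_B^T\bm{Q}^{-1}\tilde{\bm{\omega}}_B$ against $-\tilde{\bm{\omega}}_B^T\bm{Q}^{-1}\tilde{\bm{\Phi}}_B$ (exact only if $\bm{Q}^{-1}$ were symmetric) and drops the $\bm{I}_B^{-1}$ factor on the residual torque after substituting \eqref{eq_cont_att} into \eqref{eq_dynamics_quad}; the two repairs you sketch (reading the coupling term as $\bm{Q}^{-T}\tilde{\bm{\Phi}}_B$, and folding $\|\bm{I}_B^{-1}\|$ into the disturbance constants) are precisely what is needed to make that shared argument rigorous.
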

\begin{proof}
	 A candidate function is given as 
	\begin{equation}
		V_a(\bm{r}_a)=\frac{1}{2}\tilde{\bm{\Phi}}_B^T\tilde{\bm{\Phi}}_B+\frac{1}{2}\tilde{\bm{\omega}}_B^T\tilde{\bm{\omega}}_B,
	\end{equation}
	where $\bm{r}_a=[\tilde{\bm{\Phi}}_B^T,\tilde{\bm{\omega}}_B^T]^T$. The time derivative of $V_a$ is given as 
	\begin{equation}
	\begin{split}
\dot{V}_a=&\tilde{\bm{\Phi}}_B^T\dot{\tilde{\bm{\Phi}}}_B+\tilde{\bm{\omega}}_B^T\dot{\tilde{\bm{\omega}}}_B\\
 =& \tilde{\bm{\Phi}}_B^T(\dot{\tilde{\bm{\Phi}}}_B-\dot{\tilde{\bm{\Phi}}}_{B,d}+ \bm{K}_{\Phi}\tilde{\bm{\Phi}}_B-\bm{K}_{\Phi}\tilde{\bm{\Phi}}_B)+\tilde{\bm{\omega}}_B^T\dot{\tilde{\bm{\omega}}}_B\\
 =&-\tilde{\bm{\Phi}}_B^T\bm{K}_{\Phi}\tilde{\bm{\Phi}}_B+ \tilde{\bm{\Phi}}_B^T\bm{Q}^{-1}{\tilde{\bm{\omega}}}_B \\
& + {\tilde{\bm{\omega}}}_B^T(\bm{\tau}_{D}^B - \hat{\bm{\tau}}_{D}^B  +\bm \tau_{ext}
-\bm{Q}^{-1}\tilde{\bm{\Phi}}_B - \bm{K}_{\omega}\tilde{\bm{\omega}}_B)\\
 =&-\tilde{\bm{\Phi}}_B^T\bm{K}_{\Phi}\tilde{\bm{\Phi}}_B -\tilde{\bm{\omega}}_B^T\bm{K}_{\omega}\tilde{\bm{\omega}}_B\\
 &+ \tilde{\bm{\omega}}_B^T({\bm{\tau}}_{D}^B - \hat{\bm{\tau}}_{D}^B +\bm \tau_{ext}).
	\label{eq_va_dot1}
 	\end{split}
	\end{equation}
Note that we assume $\|{\bm{\tau}}_{D}^B - \hat{\bm{\tau}}_{D}^B\| \leq \delta_{\tau,D}$ and $ \|\bm \tau_{ext} \|\le\delta_{\tau, ext}$. Using the comparison theorem \cite[Section~9.3]{khalil2002nonlinear}, we define $W_a=\sqrt{V_a}=\frac{1}{\sqrt{2}}\left\|\bm{r}_a\right\|$. After finite time $T_a$, we can obtain
\begin{equation}
\begin{split}
    \dot{W}_a & =\frac{-\tilde{\bm{\Phi}}_B^T\bm{K}_{\Phi}\tilde{\bm{\Phi}}_B -\tilde{\bm{\omega}}_B^T\bm{K}_{\omega}\tilde{\bm{\omega}}_B}{\sqrt{2}\left\|\bm{r}_p\right\|}\\
&+\frac{\tilde{\bm{\omega}}_B^T({\bm{\tau}}_{D}^B - \hat{\bm{\tau}}_{D}^B +\bm \tau_{ext})}{\sqrt{2}\left\|\bm{r}_p\right\|}\\
    &\leq \frac{-\lambda_a\left\|\bm{r}_a\right\|^2+(\delta_{\tau,D}  + \delta_{\tau,ext})\left\|\bm{r}_a\right\| }{\sqrt{2}\left\|\bm{r}_a\right\| }\\
    &=-\lambda_a W_a+\frac{\delta_{\tau,D}+ \delta_{\tau,ext}}{\sqrt{2}},
\end{split}
\end{equation}
where $\lambda_a$ is the minimum eigenvalue of $\bm{K}_{\Phi}$ and $\bm{K}_{\omega}$. Since $\bm{K}_{\Phi}$ and $\bm{K}_{\omega}$ are positive diagonal matrices, we have $\lambda_a>0$.

According to the BIBO stability \cite[Section~5.2]{khalil2002nonlinear}, we have $\left| W_a\right| \leq ({\delta_{\tau,D}} + \delta_{\tau,ext})/(\sqrt{2}\lambda_a)$ as $t\rightarrow \infty$. With the definition of $W_a$, we have $\|\bm{r}_a\| \leq ({\delta_{\tau,D}}+ \delta_{\tau,ext})/{\lambda_a}$ as $t\rightarrow \infty$. According to the definition of $\bm{r}_a$, we have $\| \tilde{\bm{\Phi}}_B\|\leq ({\delta_{\tau,D}}+ \delta_{\tau,ext})/{\lambda_a}$ as $t\rightarrow\infty$.
\end{proof}

 \section{Robotic Arm and Coordinated control}\label{sec_Manipulator_Coordinate}
This section first introduces the robotic arm control, followed by the coordinated control.

\subsection{Robotic Arm Control}
The computed torque control method \cite[Chapter 4]{murray2017mathematical} is introduced to design the robotic arm control. The desired joint angles are calculated by the analytical inverse kinematics algorithm, which is deduced by combining geometric and algebraic methods \cite[Chapter 4]{edition2005introduction}.
Let $\bm{q}_d \in \mathbb{R}^5$ denote the desired joint angle vector. The joint angle error is defined as $\tilde{\bm{q}}=\bm{q}-\bm{q}_d$. The computed torque controller is given by
\begin{equation}
    \begin{aligned}
    \bm{\tau}_M&= \bm{M}(-\bm{K}_{M,v} {\dot{\tilde{\bm{q}}}} -\bm{K}_{M,p}\tilde{\bm q})\\
    &+\bm{C}(\bm{\omega}_B,\dot {\bm{\omega}}_B,\dot {\bm{v}}_B,\bm q,\dot {\bm{q}}),
\end{aligned}
\end{equation}
where $\bm{M}$ is the inertia matrix, $\bm{C}$ is a nonlinear term, including centrifugal force terms, Coriolis acceleration terms, and gravity terms, and $\bm{K}_{M,v} \in \mathbb{R}^{3\times3}$ and $\bm{K}_{M,p} \in\mathbb{R}^{3\times3}$ are constant gain matrices.
The disturbances from the quadcopter base are included in the term $\bm C$, which is compensated for in the computed torque controller.
The error dynamics can be written as $\ddot{\tilde{\bm q}}+\bm{K}_{M,v} {\dot{ \tilde{\bm q}}}+\bm{K}_{M,p}\tilde{\bm q}=0$.  Since the error equation is linear, $\bm{K}_{M,v}$ and $\bm{K}_{M,p}$ can be easily chosen so that the overall system is exponentially stable \cite[Chapter 4]{murray2017mathematical}. 

 The stability analysis of the entire system is divided into two parts: stability of the quadcopter base and stability of the robotic arm. In Section \ref{sec_positioncontrol} and \ref{sec_attcontrol}, we have presented the stability proofs for the quadcopter base's position and attitude control. This section demonstrates the exponential stability of the robotic arm. Therefore, both subsystems of the entire system are stable.

\subsection{Coordinated control}\label{Coordinated control}

The coordinated control algorithm is designed to allocate the motion of the quadcopter base and the robotic arm to track the desired trajectory of the end-effector. 
Although the avian head stabilization dynamic model may provide valuable insights, we did not incorporate it into our coordinated control design for the following reason. Current literature primarily models the avian head stabilization dynamic as a mass-spring-damper system \cite{2015passive_avian_head}, where the bird's neck is represented as a spring with stiffness $k$, and the head is treated as a mass $m$. However, our robotic arm is a rigid structure with five degrees of freedom and cannot currently be equivalently modeled as a mass-spring-damper system. Nevertheless, we can achieve similar functionality through careful design of the coordinated control. Our coordinated control algorithm operates in two modes: hover mode and cooperation mode.

In the hover mode, the quadcopter hovers and maintains its position, which is suitable for scenarios where the end-effector's motion range is relatively small, such as pick-and-place tasks \cite{heredia2014control} and peg-in-hole tasks \cite{10339889}. The inputs of this mode are $\bm{p}_{E,d}$, ${\dot{\bm p}}_{E,d}$, $\alpha_d$, $\beta_d$, $\dot \alpha_d$, and $\dot \beta_d$. The outputs are $\bm{p}_E^B$ and ${\dot {\bm q}}_d$. From \eqref{eq_pe}, we have $\bm{p}_{E,d}^B=\bm{R}_B^T(\bm{p}_{E,d}-\bm{p}_B)$. 
Therefore, the desired joint angles $\bm q_d$ can be solved by the analytical inverse kinematics algorithm \cite[Chapter 4]{edition2005introduction}. 

The desired joint velocity is calculated based on the Jacobian matrix.
Let $\bm{\eta}_B=[\bm{v}_B^T,\bm{\omega}_B^T]^T\in \mathbb{R}^6$ represent the generalized velocity of the quadcopter base. Since we only control two attitude angles $\alpha$ and $\beta$, we define the generalized velocity of the end-effector as $\bm{\eta}_E=[\bm{v}_E^T,\dot{\alpha},\dot{\beta}]^T \in \mathbb{R}^5$. Let $\bm{T}\in \mathbb{R}^{2\times 3}$ denote the transformation matrix between $[\dot{\alpha},\dot{\beta}]^T$ and the angular velocity $\bm{\omega}_E$. It is formulated as
\begin{equation}
    \begin{aligned}
    \bm{T}=\begin{bmatrix}
        1 & \sin (\alpha)\tan (\beta) & -\cos (\alpha)\tan (\beta)\\
        0& \cos(\alpha) & \sin(\alpha)\\
    \end{bmatrix}.
\end{aligned}
\end{equation}

Let $\bm{E}_3$ denote the $3$-dimensional identity matrix. Combining \eqref{eq_ve} and \eqref{eq_oe}, we obtain $\bm{\eta}_E=\bm{J}_B \bm{\eta}_B+\bm{J}_q\dot{\bm{q}}$, where
\begin{equation}
    \begin{aligned}
\bm{J}_B=\begin{bmatrix}
    \bm{E}_3&-[\bm{R}_B\bm{p}_E^B]_{\times}\\
    \bm{0}_2&\bm{T}
\end{bmatrix},
\bm{J}_q=\begin{bmatrix}
    \bm{J}_v\\
   \bm{T} \bm{J}_o
\end{bmatrix}.
\end{aligned}
\end{equation}

Let $\bm{\eta}_{E,d} \in \mathbb{R}^5$ represent the desired generalized velocity of the end-effector. The desired joint velocity can be solved as
\begin{equation}
\label{eq_des_dq}
    \dot{\bm{q}}_d=\bm{J}_q^{\dagger} (\bm{\eta}_{E,d} - \bm{J}_B \bm{\eta}_B),
\end{equation}
where $\bm{J}_q^{\dagger}$ is the generalized inverse matrix of $\bm{J}_q$, given by $\bm{J}_q^{\dagger}= \bm{J}_q^T(\bm{J}_q\bm{J}_q^T)^{-1}$. 

The cooperation mode is suitable for scenarios where the end-effector has a large motion range. The quadcopter and robotic arm are required to move together to accomplish tasks. This mode can be used for trajectory tracking \cite{Tzoumanikas2020AerialMU}, pulling/pushing tasks \cite{7139968}, etc.
Its input is $\bm{p}_{E,d}$. Its outputs are $\bm{p}_{B,d}$ and ${\dot {\bm q}}_d$.
To ensure the manipulability and dexterity of the robotic arm, we let the end-effector stay at the center of the workspace. Let $\bm{p}_{C}^B\in \mathbb{R}^3$ represent the workspace center in $\Sigma_B$. The desired position of the quadcopter is then calculated as $\bm{p}_{B,d}=\bm{p}_{E,d}-\bm{R}_{B}\bm{p}_{C}^B$.
Subsequently, the desired joint angles $\bm q_d$ can be solved using the method outlined in the hover mode.

\section{Experimental Validation}\label{sec_experiment}

This section presents numerical experimental results to validate the proposed method. 

\subsection{Experimental Setup}

The wheelbase of the quadcopter is 0.93~m.  The quadcopter and robotic arm mass are 4.39~kg and 1.03~kg, respectively. The dimensions of the arm are shown in Fig.~\ref{am}b. 
The feedback data for the quadcopter's position and attitude are updated at a sampling frequency of 100 Hz and 200 Hz, respectively. The feedback data for the joint angles and velocities are updated at 200 Hz. Moreover, zero mean normally distributed measurement noise, with a standard deviation of $2 \ \times 10^{-2} \text{m/s}^2$ for accelerations of the quadcopter, $10^{-2} \ \text{rad/s}^2$ for angular accelerations
of the quadcopter,  $10^{-2} \ \text{rad/s}^2$  for the joint accelerations, has been added.

In all numerical simulation experiments, the control gains are set as $\bm{K}_{p}=\text{diag}([2.2,2.2,2.2])$, $\bm{K}_{v}=\text{diag}([2,2,2])$, $\bm{K}_{\Phi}=\text{diag}([24,24,24])$, $\bm{K}_{\omega}=\text{diag}([16,16,16])$, $\bm{K}_{M,v}=\text{diag}([100,100,100])$, and $\bm{K}_{M,p}=\text{diag}([100,100,100])$.

\begin{figure*}[!t]
		\centering
		\includegraphics[width=\textwidth]{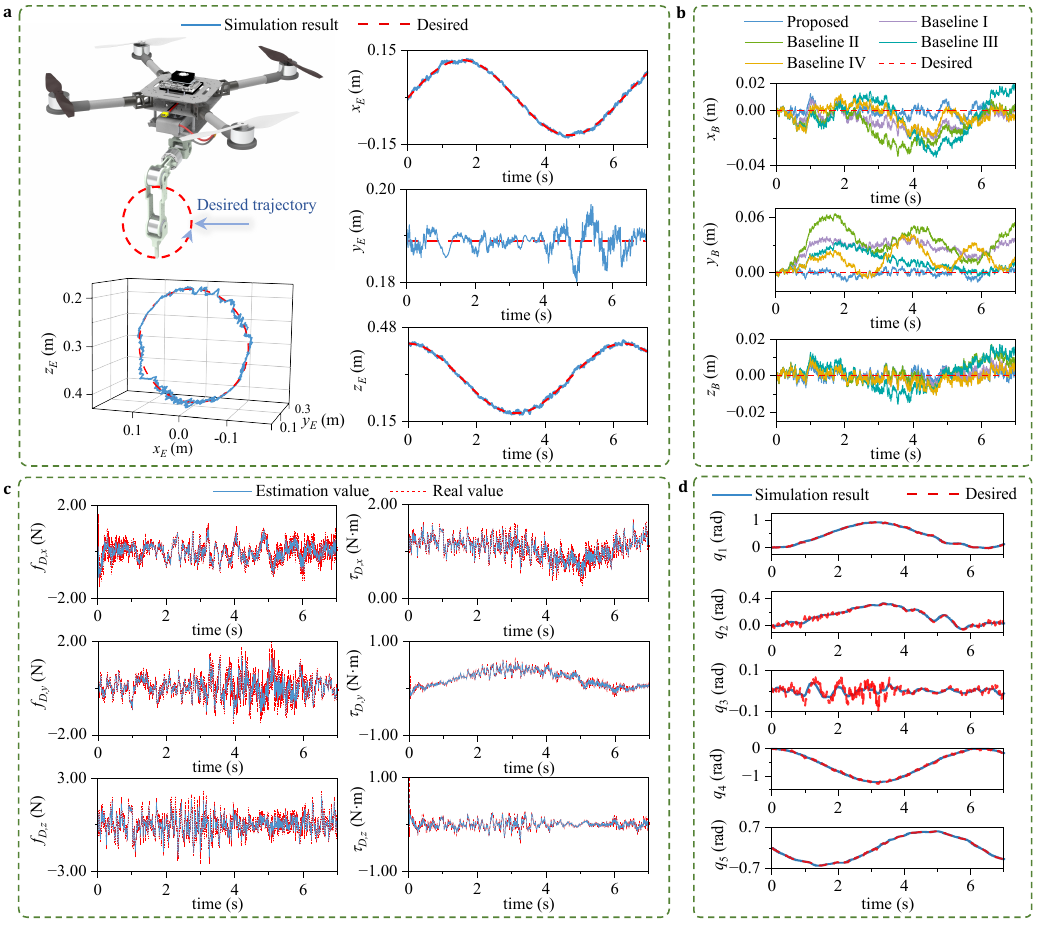}
	\caption{The tracking performance of aerial manipulator under uniformly distributed disturbance. \textbf{a}, Tracking results of the end-effector. \textbf{b}, Tracking results of the quadcopter. \textbf{c}, Estimation results of dynamic coupling force $\bm{f}_D$ and torque $\bm{\tau}_D$. \textbf{d}, Tracking results of robotic arm joints. }
	\label{sim1}
\end{figure*}

\begin{figure*}[!t]	
		\centering
		\includegraphics[width=\textwidth]{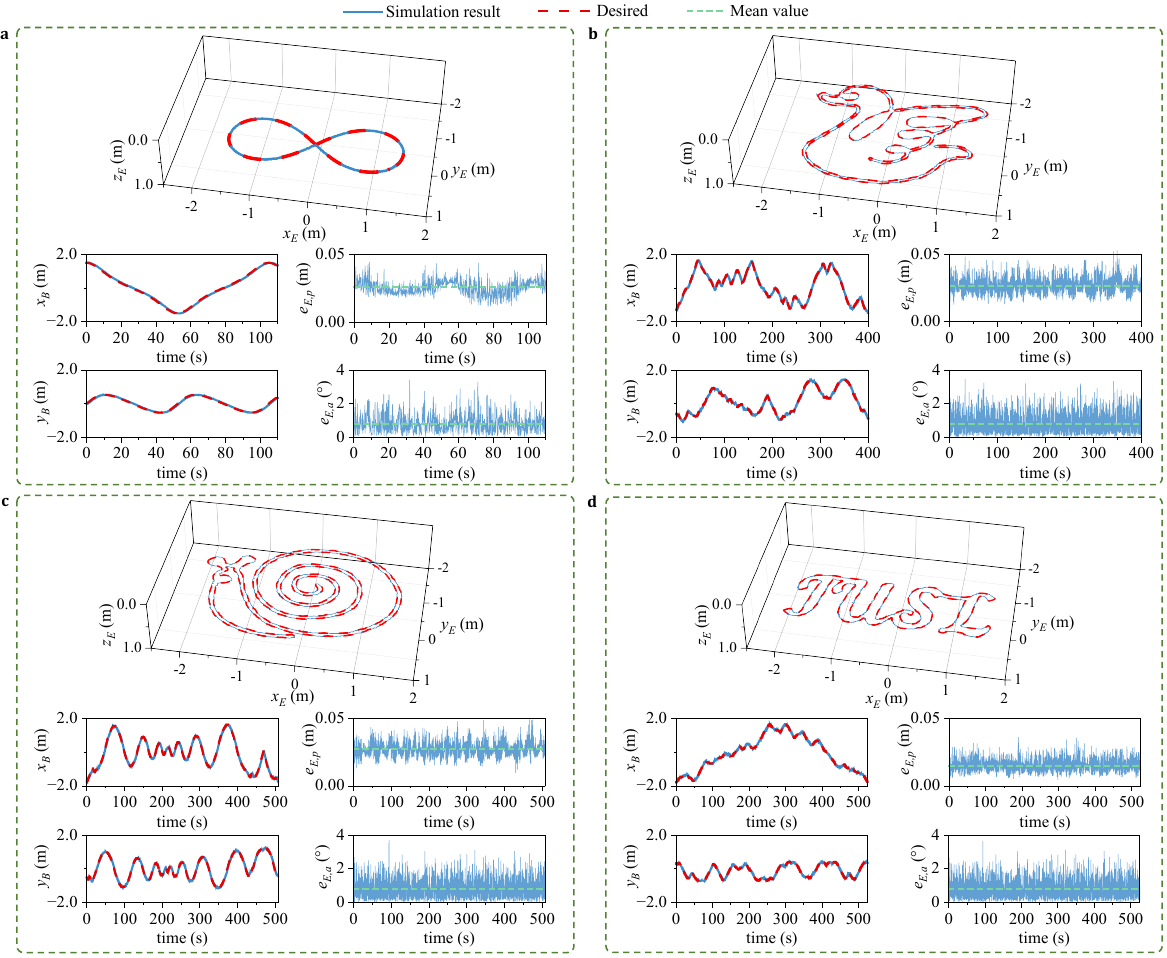}
	\caption{The trajectories of the quadcopter and end-effector. In particular, we let ${e}_{E,p}=\|\bm{e}_{E,p}\|$ denotes the position error of the end-effector, and ${e}_{E,a}=\|\bm{e}_{E,a}\|$ denotes the attitude error of the end-effector. \textbf{a}, Tracking results of the 8-shaped curve (lemniscate of Huygens) \cite{curve8}. \textbf{b}, Tracking results of a simple sketch of a duck. \textbf{c}, Tracking results of a simple sketch of a snail. \textbf{d}, Tracking results of a simple sketch of the fancy letter "IUSL" in English.}
	\label{sim2}
\end{figure*}

\begin{figure*}[!t]	
		\centering
		\includegraphics[width=\textwidth]{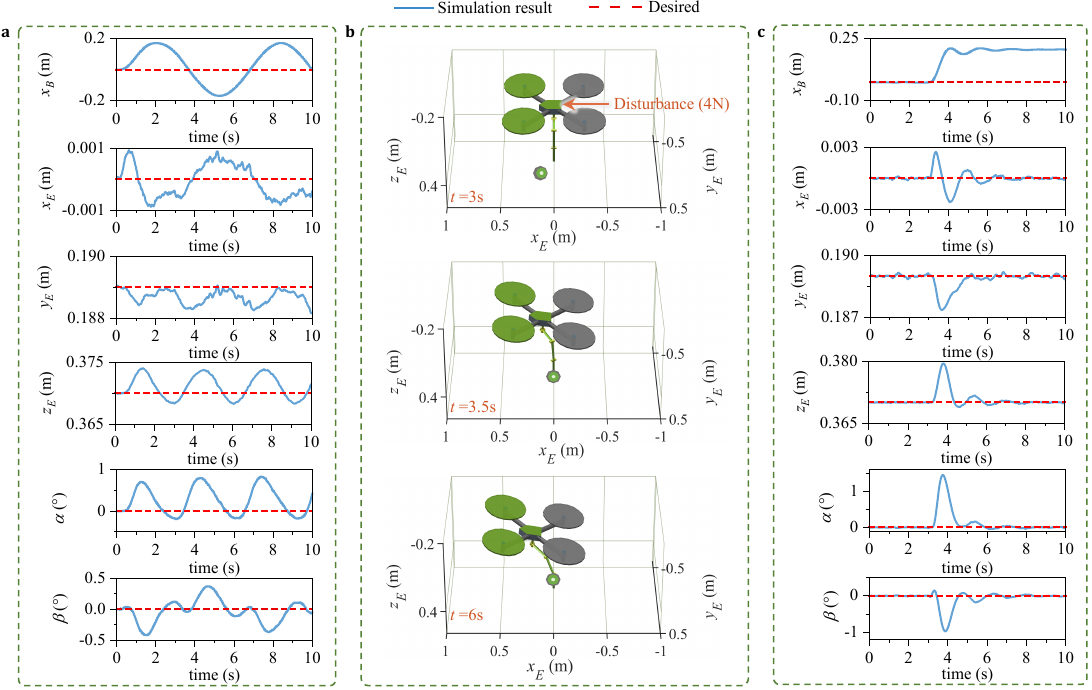}
	\caption{The experimental results of the quadcopter and the end-effector. \textbf{a}, The experimental results under a sinusoidal force disturbance in the direction of $x$-axis with an amplitude of 4 N and an angular frequency of 1 rad/s. \textbf{b}, The visual description of the end-effector pose stabilization under the step force disturbance. \textbf{c}, The experimental results under a step force disturbance in the direction of $x$-axis, applied from 3rd second with a final value of 4 N.}
	\label{sim3}
\end{figure*}

\subsection{Example 1: Disturbance Rejection Experiment}

The control objective of this experiment is for the end-effector to track a circular trajectory on the $x$-$z$ plane with a radius of 12 cm and an angular frequency of 1 rad/s.  During the experiment, the quadcopter maintains a fixed position despite uniformly distributed measurement noise. The range of the position measurement noise is  $[-0.02, 0.02]$ m, and the range of the attitude measurement noise is  $[-5, 5]^\circ$.

This experiment aims to verify the effectiveness of the proposed control framework. The tracking results of the end-effector are shown in Fig~\ref{sim1}a. The mean position error of the end-effector is 0.50 cm, while the maximum position error is 2.1 cm. The mean attitude error is 0.8$^{\circ}$, while the maximum attitude error is 3.3$^{\circ}$. These results indicate that the end-effector can achieve high tracking accuracy even in the presence of position and attitude measurement noise in the quadcopter. In addition, we validate the accuracy of the proposed RNE-based estimation algorithm. As shown in Fig.~\ref{sim1}c, the estimated force and torque can track real values well, with a mean force error of 0.4 N and a mean torque error of 0.1 N$\cdot$m. Moreover, we validate the effectiveness of the robotic arm controller. Fig. \ref{sim1}d shows the tracking performance of the joints.

To further demonstrate the performance of the proposed flight controller, we compare it with four baseline controllers. The first baseline controller is similar to the proposed controller but without compensating for dynamic coupling. The second is based on the inverse dynamic control method proposed in \cite{pierri2018adaptive}. 
The third uses the classic PID control method \cite{2013PD}.  The fourth is designed by the geometric control method \cite{2021switchable}. These control methods are widely used in tracking control for aerial manipulators.
Figure.~\ref{sim1}b shows the experimental results of these five controllers. The mean position error of the proposed method is 0.52 cm. Compared to the other four controllers, the proposed flight controller can reduce the mean position error by 81\%, 86\%, 75\%, and 70 \%, respectively. It illustrates the performance of the proposed flight control method.

\subsection{Example 2: End-Effector Trajectories Tracking}

The objective of this experiment is for the end-effector to accurately track four complex trajectories (see Fig.~\ref{sim2}). Given the extensive range of these trajectories, both the quadcopter and the robotic arm must move together to track the desired position of the end-effector. In particular, in some situations, obtaining velocity commands for the desired trajectory is challenging, which can increase tracking errors. To simulate these conditions, the controller does not use the velocity information of these trajectories.

This experiment aims to validate the effectiveness of coordinated control in cooperation mode.
The experimental results demonstrate the end-effector's ability to track various curves accurately.
As shown in Fig. \ref{sim2}, the mean position errors of the end-effector in these four cases are 2.57 cm, 2.69 cm, 2.71 cm, and 1.51 cm, respectively, and the mean attitude errors are 0.80$^{\circ}$, 0.81$^{\circ}$, 0.82$^{\circ}$, and 0.82$^{\circ}$, respectively. Compared to the first experiment, the tracking accuracy in this experiment is worse due to two main reasons. First, these four curves have large coordinate ranges and complex trajectories, requiring high-velocity changes, which makes them difficult to track. Second, in this experiment, velocity commands of the trajectories are not provided, preventing the system from tracking the desired trajectory promptly.

\subsection{Example 3: End-Effector Pose Stabilization}

The control objective of this experiment is to maintain the end-effector's position and attitude unchanged despite two types of disturbances applied to the quadcopter. The first disturbance is a sinusoidal force in the direction of $x$-axis with an amplitude of 4 N and an angular frequency of 1 rad/s. The second disturbance is a step force along the $x$-axis, beginning at 3~s, and reaching a final value of 4~N. 

Figure \ref{fig_visual_description_avian_head} presents a comparison between end-effector stabilization and avian head stabilization. On the left, a falcon sits on a person's arm, which moves sinusoidally from left to right. The figure on the right displays our simulation results, where a sinusoidal force guides the quadcopter in performing analogous sinusoidal movements to mimic the motion of the human arm. These results show that the proposed algorithm can achieve performance comparable to that of the falcon. Figure~\ref{sim3}a illustrates the tracking results of the quadcopter and the end-effector. Under the sinusoidal force, the maximum displacement of the quadcopter in the direction of $x$-axis is up to 0.17 m, while the maximum position error of the end-effector is 0.004~m. The maximum error in the attitude angles $\alpha$ and $\beta$ are 0.84$^{\circ}$ and 0.42$^{\circ}$, respectively. To further validate the performance, a step force disturbance is also applied to the quadcopter (see Fig.~\ref{sim3}b). The tracking results are shown in Fig.~\ref{sim3}c. The maximum displacement of the quadcopter in the direction of $x$-axis reaches 0.19 m, while the maximum position error of the end-effector is 0.01~m. The maximum errors in the attitude angles $\alpha$ and $\beta$ are 1.44$^{\circ}$ and 0.97$^{\circ}$, respectively.
These results demonstrate that the end-effector can maintain its pose with high precision despite the application of two types of disturbances to the quadcopter.

 We achieve functionalities similar to avian head stabilization by leveraging the manipulation capabilities and high degrees of freedom of the robotic arm. Our proposed control method stabilizes the end-effector's pose for two reasons. First, the RNE-based estimator provides accurate estimates of the dynamic coupling between the quadcopter base and the robotic arm, enhancing the overall precision of the system. Second, the coordinated control algorithm quickly compensates for position and attitude errors in the quadcopter base, enabling end-effector tracking control similar to avian head stabilization.

\section{Conclusion} \label{sec_conclusion}
This paper proposed an avian-inspired approach for end-effector tracking control. It is verified by three numerical experiments. The first experiment demonstrates the proposed arm and overall control scheme achieve accurate position and attitude tracking performance, with a mean end-effector position error of 0.5 cm and a mean attitude error of 0.8$^{\circ}$. Furthermore, the effectiveness of the flight control algorithm is demonstrated, as our proposed algorithm reduces tracking errors by 81\%, 86\%, 75\%, and 70\% compared to the four baseline control methods.
The second experiment verifies that the end-effector can track different complex curves in cooperation mode. However, the trajectory-tracking accuracy in this experiment is lower compared to the first experiment due to the more complex trajectories and the lack of velocity commands.
The third experiment demonstrates that the aerial manipulator can stabilize the end-effector with high precision, similar to how birds stabilize their head position. In the experiment, the mean position error can achieve millimeter-level, and the mean attitude error is within 1$^{\circ}$. This demonstrates the end-effector's ability to resist quadcopter error disturbances and maintain high-level position and attitude accuracy, achieving avian head-like pose stabilization.

Future research will be directed towards two key areas. First, while we have successfully implemented high-precision control of the end-effector, our current control law lacks the integration of a specific dynamic model for avian head stabilization. Incorporating these dynamics into aerial manipulators presents a significant avenue for advancement. 
Second, our proposed method does not consider aggressive manipulation behaviors typically observed in birds, such as Eagle’s catch and Monkey-Bar behaviors \cite{MartiSaumell2021FullBodyTN}. Subsequent studies will aim to investigate this aspect through real-world experiments.

\bibliography{REF}

\end{document}